\newtheorem{theorem}{Theorem}
\newtheorem{lemma}{Lemma}
\title{A Review of Nonnegative Matrix Factorization Methods for Clustering}
\author{Ali Caner T\"urkmen \\
		  \small{Department of Computer Engineering}\\
		  \small{Bogazici University, Istanbul, Turkey}\\
		  \small{\texttt{caner.turkmen@boun.edu.tr}}}
\DeclareMathOperator*{\argmin}{arg\,min}
\begin{document}

\maketitle

\begin{abstract}
	Nonnegative Matrix Factorization (NMF) was first introduced as a low-rank matrix approximation technique, and has enjoyed a wide area of applications. Although NMF does not seem related to the clustering problem at first, it was shown that they are closely linked. In this report, we provide a gentle introduction to clustering and NMF before reviewing the theoretical relationship between them. We then explore several NMF variants, namely Sparse NMF, Projective NMF, Nonnegative Spectral Clustering and Cluster-NMF, along with their clustering interpretations.
\end{abstract}

\section{Introduction}

Clustering, the problem of partitioning observations with high intra-group similarity, has always been one of the central themes in unsupervised learning. Although a wide variety of algorithms for clustering applications have been studied in literature, the subject still remains an active avenue for research. 

In fact, it is possible to formulate clustering as a matrix decomposition problem. This formulation leads to interesting interpretations as well as novel algorithms that benefit from favorable computational properties of numerical linear algebra.

Popularized by Lee and Seung \cite{lee_learning_1999}, Non-negative Matrix Factorization (NMF) has turned into one of the primary tools for decomposing data sets into low-rank factorizing matrices in order to yield a parts-based representation. It has been shown not long after, that NMF and variants are well-performing alternatives to well-known clustering algorithms and that close theoretical links exist between the two problems. Drawing on this link, researchers have offered a variety of perspectives and different methods in applying NMF for clustering. 

This report aims to make a gentle introduction to how the clustering problem can be interpreted in a matrix factorization setting. A variety of NMF formulations will be presented, along with the intuition about their clustering interpretations. The algorithmic solutions and implementation details of these techniques will be left out with pointers to the relevant literature. 

This report is structured as follows. We introduce the clustering problem and several solution strategies in Section \ref{sec:clus}. We then introduce NMF and establish the link between $k$-means and NMF, in Section \ref{sec:nmf}. We then work on this link to introduce several NMF formulations geared towards solving the problem in Section \ref{sec:variants}. Namely, we cover Sparse NMF, Projective NMF, Nonnegative Spectral Clustering and Cluster-NMF. We conclude the report in Section \ref{sec:conclusion}.

\subsection{A Word on Notation}

Capital letters of the Latin and Greek alphabets denote matrices (e.g. $A, \Gamma, \Sigma$), while lowercase bold typeface letters denote vectors (e.g. $\mathbf{x}$). Vectors that correspond to a matrix are column vectors of that matrix, indexed accordingly. That is, $\mathbf{x_1}$ is the first column of matrix $X$. $X_{ij}$ will be used to denote the $i$-th row, $j$-th column element of matrix $X$. Script capital letters denote sets (e.g. $\mathcal{C}$).

Throughout the document, $||.||$ will be used for the $L_2$ norm of a vector, $||.||_F$ for the Frobenius norm of a matrix. $tr(.)$ stands for the \emph{trace} operator, i.e. the sum of the elements along the diagonal of a square matrix. $diag(.)$ is used in a similar sense to popular linear algebra software, and will denote a diagonal matrix constructed on the vector or list of scalars provided.

Finally, vector dimensions will also be reused. In the data clustering problem, $n$ will be introduced as the number of observations, $m$ as the number of features (i.e. the dimension of each observation), and $k$ as the number of clusters or corresponding \emph{low-rank} interpretation in NMF.


\section{Clustering}
\label{sec:clus}

In this section, we provide a gentle introduction to the clustering problem, and introduce the notation we will use when discussing NMF applications. 

Given a set of data points $\mathbf{x}_i \in \mathcal{X}$ for $1 \le i \le n$, clustering algorithms aim to find partitionings of the set such that the \emph{similarity} within each partition is maximized and similarity between different partitions is minimized. This general problem definition can be attacked with various techniques to generate a variety of different partitionings on the same set.

This simple idea has a wide array of application areas including management science, business intelligence, pattern recognition, web search, biostatistics. For example, applied to a database of customers clustering algorithms yield groups of similar customers which may better respond to targeted marketing campaigns. In musical information retrieval, clustering can be used to group together songs for similar audiences.

Clustering, or \emph{cluster analysis}, is an \emph{unsupervised} learning problem. That is, clustering techniques address unlabeled data, and try to work with different objective functions that do not involve a \emph{ground truth}.

Although it is difficult to provide a definitive taxonomy of clustering algorithms, it is useful to provide some definitions on how they are often categorized. The two primary families of clustering algorithms are partitioning methods and hierarchical methods. Partitioning methods partition the data into $k$ groups, and iteratively relocate the observations until some dissimilarity between groups and similarity within groups is achieved. This is in contrast to hierarchical clustering, where the algorithm decomposes the data set hierarchically into partitions. In the agglomerative approach, single data points are merged into groups one by one until a termination condition holds. Starting from observations and building partitions by merging small groups, these methods are also called \emph{bottom-up}. A contrast is divisive or \emph{top-down} methods which start with all nodes in the same cluster and partition clusters until some termination condition occurs.

Another divide between clustering methods we will find useful is \emph{hard} clustering versus \emph{soft} clustering. Hard clustering methods adopt \emph{exclusive cluster assignment}, in that at any step of the algorithm one observation can belong to one and at most one cluster. Soft clustering is a relaxation on this constraint, and each observation can belong to different clusters in proportion to a set of weights. 

In the following sections, we will introduce the most popular partitioning problem, $k$-means and Kernel $k$-means, a variant. This will be useful as we will reference these techniques numerous times when discussing NMF applications in clustering. 

\subsection{$k$-means}
\label{sec:clus_kmeans}

$k$-means is perhaps the most popular clustering method, and the first one that comes to mind. Suppose a data set $\mathcal{X}$ contains $n$ observations denoted by $\mathbf{x}_i \in \mathbb{R}^m$. $k$-means aims to find a partitioning of this data set $\mathcal{C}_1, \mathcal{C}_2, ..., \mathcal{C}_k$ such that $\mathcal{C}_i \subset \mathcal{X}$ and $\mathcal{C}_i \cap \mathcal{C}_j = \emptyset$ for all $1 \le i,j \le k$. The technique uses an objective function given below to minimize dissimilarity within $k$ disjoint subsets $\mathcal{C}_i$:

\begin{equation}
\label{eq:kmeans_objective}
\min_{{\mathcal{C}_1, \mathcal{C}_2,..., \mathcal{C}_k}} \sum_{i=1}^{k} \sum_{\mathbf{x} \in \mathcal{C}_i} ||\mathbf{x} - \mu_i||^2
\end{equation} 

where $\mu_i$ is the \emph{centroid} of data points assigned to cluster $\mathcal{C}_i$. In $k$-means, as the name implies, the centroid of cluster $\mathcal{C}_i$ is defined as the mean of all points assigned to it. In this regard, $k$-means tries to minimize \emph{within cluster variation}, or simply the sum of squared error within each cluster.

Algorithms for solving $k$-means are some of the earliest machine learning algorithms that have survived to date. A high-level description of the naive algorithm can be found in Algorithm \ref{algo:kmeans}. Early derivations can be found in \cite{macqueen_methods_1967,lloyd_least_1982}.

\begin{algorithm}[H]
	\SetAlgoLined
	\KwData{$k$: number of clusters \\ $\mathcal{X}$: set of data points}
	\KwResult{Set of $k$ clusters $\{\mathcal{C}_1, \mathcal{C}_2,..., \mathcal{C}_k\}$}
	arbitrarily choose $k$ objects as initial cluster centroids\;
	\While{cluster centroids change}{
		assign each point $\mathbf{x} \in \mathcal{X}$ to cluster $\mathcal{C}_i$ with the closest (by Euclidean distance) cluster centroid $\mu_i$ \;
		recalculate $\mu_i$ as the mean of data points assigned to $\mathcal{C}_i$, for all $1 \le i \le k$ \;
	}
	\caption{$k$-means algorithm}
	\label{algo:kmeans}
\end{algorithm}

Expressing $k$-means in vectorized form will help greatly in \emph{kernelizing} a solution as well as expressing it in matrix factorization form. 

Let us define \emph{cluster membership matrix} $B \in \{0, 1\}^{n \times k}$, such that the matrix element $B_{ij} = 1$ if observation $i$ is assigned to cluster $j$, and 0 otherwise. We will refer to this matrix numerous times throughout this document, so it makes sense to explore its properties. First, observe that each row of $B$ can have only one element taking the value 1, and all other elements must be equal to 0. This is due to the definition of hard clustering. As such, the columns of matrix $B$ are orthogonal.

Consequently we also have $\sum_i B_{ij} = |\mathcal{C}_j|$. We now introduce the matrix $$D = diag(1/|\mathcal{C}_1|, 1/|\mathcal{C}_2|,..., 1/|\mathcal{C}_k|)$$ Observe that we could have equivalently written $|\mathcal{C}_i| = ||\mathbf{b}_i||^2$ for all $i$. 

Let us calculate $D^\frac{1}{2}$, defined as $D^\frac{1}{2}_{ij} = \sqrt{D_{ij}}$. Note that calculating $BD^\frac{1}{2}$, we have effectively normalized the columns of $B$ as well, so they are orthonormal. Equivalently, $(BD^\frac{1}{2})^TBD^\frac{1}{2} = D^\frac{1}{2}B^TBD^\frac{1}{2} = I$.

Finally, we arrange our observations as the columns of a data matrix $X$ such that 

$$
X = [\mathbf{x}_1, \mathbf{x}_2, ..., \mathbf{x}_n]
$$

Briefly revisit (\ref{eq:kmeans_objective}). Note that, due to the definition of $k$-means, we define the cluster centroids $\mu_i$ as

$$\mu_i = \frac{1}{|\mathcal{C}_i|} \sum_{\mathbf{x} \in \mathcal{C}_i} \mathbf{x}$$

We are now equipped with all the tools we need to express $k$-means in vectorized form. Observe that the matrix product $XBD$ yields a matrix which has cluster centroid vectors $\mu_i$ arranged as its columns. We are looking to express the Euclidean distance of each observation from the cluster center it is assigned to. We can simply do this by calculating $XBDB^T$. We have now effectively cast the cluster centroids to $\mathbb{R}^{m \times n}$, and arranged a matrix where each column $i$ corresponds to the cluster centroid that the data item $\mathbf{x}_i$ is assigned to. We can now rewrite the clustering objective (\ref{eq:kmeans_objective}) in the minimization problem below:

\begin{equation}
	\label{eq:kmeans_vectorized}
	\min_B ||X - XBDB^T||_F^2
\end{equation}

This is the same sum of squared error given in the original function. Also observe that the objective function is only dependent on $B$, since $D$ is only defined as the inverse squared column norms of $B$ arranged on the diagonal of a matrix.

Advancing this notation further, we know that we could write $||A||_F^2 = tr(A^TA)$ for any matrix $A$. Using this property, we write the same problem in the form:

\begin{equation*}
	\label{eq:kmeans_vec_trace}
	\min_B tr((X - XBDB^T)^T(X - XBDB^T))
\end{equation*}

As shown in \cite{zha_spectral_2001,ding_equivalence_2005}, we can show that solving this form is equivalent to solving

\begin{equation*}
	\label{eq:kmeans_vec_trace_2}
	\min_B tr(X^TX) - tr(X^TXBDB^T)
\end{equation*}

This derivation is  more involved, and is presented in Appendix \ref{app:a}, as Lemma \ref{lemma:tracemax}. Observe that the first term in the new objective function is a constant, and the problem can safely be reexpressed as:

\begin{equation}
	\label{eq:kmeans_vec_tracemax}
	\max_B tr(X^TXBDB^T)
\end{equation}

We will see in later sections that this set of derivations lend themselves to extensions for matrix factorization forms as well as spectral relaxation and application of the kernel trick. We can now move on to introduce, by means of this notation, how the $k$-means problem can be extended with kernel functions.

\subsection{Kernel $k$-means}
\label{sec:clus_kernel_kmeans}

At its simplest form, $k$-means clustering is only capable of calculating spherical clusters. An extension to $k$-means for addressing more complex, non-linearly bounded clusters is by means of the \emph{kernel trick}, introduced in this section. After a brief introduction to the kernel trick, we will move to define kernel $k$-means. 

In many machine learning tasks, a given set of data points $\mathbf{x} \in \mathcal{X}$ are transformed into feature vectors via a \emph{feature map}, $\phi: \mathcal{X} \rightarrow \mathcal{F}$ where $\mathcal{F}$ is the feature space \cite{smola_tutorial_2004}. However, for describing highly non-linear interactions and patterns in data, the feature map may end up producing feature vectors of much higher dimension. To avoid the associated computational cost, one may construct a function $K: \mathcal{X} \times \mathcal{X} \rightarrow \mathbb{R} $ such that 

\begin{equation*}
K(\mathbf{x}, \mathbf{y}) = \phi(\mathbf{x})^T\phi(\mathbf{y})
\end{equation*}

Having constructed this function, often via the use of domain knowledge, it will now take only $O(n^2)$ time in terms of inner products to compute a Gram matrix, where $n$ is the number of observations. In computing this matrix, the data items have been \emph{implicitly} transformed to a feature space of possibly infinite dimension. 

More intuitively, kernel functions often have implications as measures of similarity between vectors $\mathbf{x}$ and $\mathbf{y}$. Machine learning methods which rely on the kernel trick to perform highly non-linear separations are called \emph{kernel methods}. 

When clusters are spherical, dense and linearly separable, the naive $k$-means algorithm\footnote{We refer here to the algorithm given in Algorithm \ref{algo:kmeans}. Note that $k$-means is a \emph{problem}, not an algorithm. An appropriate name for it could be the Lloyd algorithm \cite{lloyd_least_1982}} can be expected to give a fair representation of clusters. However, if the clusters are of arbitrary shape, and are only non-linearly separable, then a different approach is required. It will be shown that the $k$-means algorithm can be \emph{kernelized} in order to achieve that effect.

\begin{figure}[h]
	\begin{center}
		\includegraphics[width=11cm]{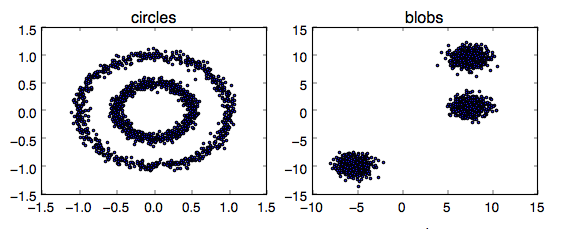}
	\end{center}
	\caption{Linear Separability of Data}
	\label{fig:circles_blobs}
\end{figure}

For motivation, take Figure \ref{fig:circles_blobs}. It is clear that while the second \emph{blobs} data set is linearly separable into meaningful clusters, a non-linear decision boundary is required to partition the first data set into the two \emph{circles}. 

It is for this purpose we extend the standard $k$-means formulation to kernelized form, as given in \cite{dhillon_kernel_2004, dhillon_unified_2004}. Let us introduce the matrix $$\Phi = [\phi(\mathbf{x}_1), \phi(\mathbf{x}_2), ..., \phi(\mathbf{x}_n)]$$

Intuitively, $\Phi$ is a matrix that has the observations $\mathbf{x} \in \mathcal{X}$, transformed via the feature map $\phi$ to arbitrary dimension in its columns. 

Recall the line of derivations that led to (\ref{eq:kmeans_vec_tracemax}). Without applying the kernel trick, we would have used the matrix $\Phi$ instead of $X$ in each step of the process, having explicitly transformed its features. We would have then ended up with the trace maximization problem:

\begin{equation}
	\label{eq:kmeans_kernel_tracemax}
	\max_B tr(\Phi^T\Phi BDB^T)
\end{equation}

However, note that by definition, $\Phi^T\Phi$ is the \emph{kernel matrix}, such that each element $(\Phi^T\Phi)_{ij} = \phi(\mathbf{x}_i)^T\phi(\mathbf{x}_j) = K(\mathbf{x}_i, \mathbf{x}_j)$. Given this fact, we never need to explicitly compute the matrix $\Phi$. We could just compute the kernel matrix $\Phi^T\Phi$ with the kernel function, and safely plug it in place of the linear inner product matrix $X^TX$.

The kernel trick has a profound implication. Without the need for ever computing features in the feature space $\mathcal{F}$, the kernel matrix has the power of \emph{implicitly} mapping data to arbitrary dimension. However, note that calculating the kernel matrix and performing matrix operations on it requires additional computational cost, as the kernel matrix has $n^2$ elements whereas the data matrix had $mn$ elements. Especially for cases where the number of observations $n$ is very large, kernel methods pose a disadvantage as they become increasingly difficult to compute.

\section{Non-Negative Matrix Factorization}
\label{sec:nmf}

In machine learning, approximating a matrix by two factorizing low-rank matrices has many demonstrated benefits. Among these benefits is discovering structure in data, as well as reducing dimensionality and making way for better generalization.

Perhaps one of the most popular methods geared towards low-rank approximation is Principal Component Analysis (PCA), a close relative of Singular Value Decomposition (SVD). At a high level, PCA works to identify the next best vector (or \emph{component}) through the data that accounts for the highest variance \cite{Shlens2014}. Once this is done, it then repeats the process for the residual variance. As a direct and analytical result of applied linear algebra, PCA helps identify structure and reduce dimensionality.

Popularized in \cite{lee_learning_1999}, Non-negative Matrix Factorization (NMF) is a low-rank approximation technique which introduces the constraint that the data matrix and the factorizing matrices are nonnegative. By allowing only additive linear combinations of components with nonnegative coefficients (i.e. a conical combination), NMF inherently leads to a parts-based representation. Compared to PCA, which is a holistic representation model, NMF leads to a much more intuitive and interpretable representation.

Formally, NMF is characterized by the following factorization:

\begin{equation}
\label{eq:nmf}
X \approx WH 
\end{equation}

where $X, W, H \ge 0$, $X \in \mathbb{R}^{m \times n}, W \in \mathbb{R}^{m \times k}, H \in \mathbb{R}^{k \times n}$, and $k$ is the number of components (the \emph{rank}) in which the data matrix will be represented. Naturally, it makes sense that $ k < \min(m,n) $.

The approximation problem presented in (\ref{eq:nmf}) is often formulated as an optimization problem of the form $$\min_{W, H \ge 0} D(X || WH)$$where $D(A||B)$ is a \emph{divergence} function. The most popular choice for the divergence is the Euclidean distance between $X$ and $WH$, in which case the optimization problem becomes:

\begin{equation}
\label{eq:nmfopt}
\min_{W,H \ge 0} ||X - WH||_F^2 = \sum_{i,j} (X - WH)_{ij}^2 
\end{equation}

Several strategies and algorithms have been presented in literature towards the solution of this problem, including Constrained Alternating Least Squares in the original Paatero and Tapper presentation \cite{paatero_positive_1994}\footnote{in which the problem was called ``Positive Matrix Factorization''}, multiplicative update rules presented in \cite{lee_algorithms_2001}, projected gradient methods \cite{hoyer_non-negative_2004, lin_projected_2007}.

We dedicate the next few paragraphs to building intuition about the implications of NMF. Assume each column of $X$ is an image of $m$ pixels. Then, using the approximation $X \approx WH$, one may say that the columns of $W$ correspond to a \emph{basis image}. Each column of $H$ on the other hand, corresponds to an encoding of each image in terms of the basis images. 

Nonnegativity constraints on $X$, $W$, and $H$ implicitly make for a more interpretable and parts-based representation of the data. $W, H \ge 0$ implies that each element $X_{ij} = \mathbf{w}_i\mathbf{h}^T_j$, is a weighted addition of positive vectors. By limiting both the basis and the encoding to nonnegative values, NMF forces the factorization to an additive weighted structure (encoding), of nonnegative building blocks (bases). We will see in Section \ref{sec:nmf_app_cluster} that this is a useful constraint in clustering.

In contrast, PCA finds a basis composed of the eigenvectors of the covariance matrix $\frac{1}{n}XX^T$ corresponding to the $k$ greatest singular values, guaranteeing the best representation in lower-rank \cite{Shlens2014}. However, the eigenvectors of the covariance matrix are not necessarily nonnegative. Intuitively, PCA bases do not represent an additive linear combination of the features, and the weight in one principal component can be canceled out by another. This prevents PCA bases to be readily interpreted as parts-of-whole representations.

In turn, the nonnegativity in NMF implies that its use is constrained to cases where the data matrix is composed of nonnegative elements. In many real world scenarios, however, the nonnegativity of data points is inherent, such as pixel intensities in image processing, signal intensities, chemical concentrations, etc. 

\section{NMF Applications to Clustering}
\label{sec:nmf_app_cluster}

\subsection{Intuition}
\label{sec:cluster_intuition}

Having mentioned that NMF can be used to ``discover structure in data'', we already have the intuition that it must have interpretations in a clustering setting.

We had introduced the $k$-means problem earlier in Section \ref{sec:clus_kmeans}, and expressed it in vectorized form. We will now elaborate that this form is easily interpretable in a matrix factorization framework. First, recall (\ref{eq:kmeans_vectorized}):

$$
\min_{B, (BD^\frac{1}{2})^TBD^\frac{1}{2} = I} ||X - XBDB^T||_F^2
$$

Note that we have made the constraint that $B$ has orthogonal columns explicit. Let's elaborate on this optimization problem. Evidently, we are trying to approximate a data matrix, $X \approx XBDB^T$. If the rank of B is unconstrained, a trivial solution would be to take $B=I$, in which case $D=I$ would hold by definition. This would be equivalent to assigning each observation to its own cluster. However, clearly we would like to summarize the data in $k < min(m,n)$ clusters.

Compare the approximation problem $X \approx XBDB^T$ to (\ref{eq:nmf}). Assume our data matrix is $X \ge 0$. Then, by definition, the matrix of cluster centroids $XBD \ge 0$. Then we could compare $XBD$ to the \emph{basis} or $W$ in the NMF formulation, and $B^T$ to $H$, the \emph{encodings}. If such a representation is achievable, we have a solid way of learning clusters via favorable computational properties of numerical linear algebra.

However, a critical constraint in the clustering problem prevents us from jumping to this conclusion. We defined each row of $B$ as 0 except one element, which is 1. From here we found that $D^\frac{1}{2}B^TBD^\frac{1}{2}=I$. NMF, on the other hand, does not constrain $H^T$ to have orthogonal columns which would give us the needed clustering interpretability.

Let us dive deeper into this problem. What does $H^T$ having orthogonal columns imply? We have argued that row-wise sparsity induces orthogonality of columns. This leads to the intuition that relaxing this orthogonality leads to denser rows. Remember each row of $B$ is interpreted as the cluster assignment vector for some observation. A dense vector implies that the observation is not characterized as belonging to one cluster, but expressed as a weighting of several cluster centroids. If the original goal is clustering interpretability however, this is not a desirable outcome. 

Then, can we solve NMF while constraining the right factorizing matrix to have the same structure as $B^T$? It turns out this is also an intractable problem, and the orthogonality constraint must be relaxed in order to make the problem solvable in polynomial time. 

We are then left with the following intuition. If we can find ways to constrain factorizing matrices to near-orthogonality, NMF provides the ground for clustering interpretations. Then simply selecting the maximum element's index from the rows of $B$, we can directly use NMF methods for producing cluster assignments.

\subsection{Kernel $k$-Means and Symmetric NMF Equivalence}
\label{sec:cluster_equiv}
Ding et al. in \cite{ding_equivalence_2005} have shown a stronger relationship between symmetric NMF and spectral $k$-means clustering. They further demonstrate that solving the optimization problem presented in (\ref{eq:kmeans_kernel_tracemax}) with additional constraints one can achieve symmetric NMF formulation, while the orthogonality of the factorizing matrices are preserved. In this section, we follow their footsteps in demonstrating the same equivalence. 

Before moving forward, note that \emph{symmetric} NMF is simply the factorization a symmetric nonnegative matrix $A$ to factors $A \approx GG^T$.

To pave the way, we work with our usual notation introduced in (\ref{eq:kmeans_vectorized}): $$ \min_B ||X - XBDB^T||^2_F $$

We had further demonstrated, with the help of Lemma \ref{lemma:tracemax}, that this was equivalent to solving

\begin{equation}
	\label{eq:tracemax}
	 \max_{BD^\frac{1}{2}} tr((BD^\frac{1}{2})^TX^TXBD^\frac{1}{2}) \,\, s.t. \, (BD^\frac{1}{2})^TBD^\frac{1}{2} = I
\end{equation}

In fact, this completes the expression of the $k$-means problem introduced in Section \ref{sec:cluster_intuition}, in \emph{spectral relaxation} notation similar to \cite{zha_spectral_2001,ding_equivalence_2005}. We had shown in Section \ref{sec:clus_kernel_kmeans} that the inner product kernel matrix can safely be replaced with other kernel matrices $\Phi^T\Phi$. 

From this point on, we introduce a slight change in notation. We will call this kernel matrix $A = \Phi^T\Phi$. We will absorb $D^{\frac{1}{2}}$ into $B$, and introduce $G = BD^{\frac{1}{2}}$. Note that the definition of $B$ and $D$ immediately lead to, $G^TG = I$. 

We can now demonstrate that solving symmetric NMF on the kernel matrix, such that $A \approx GG^T$ is equivalent to Kernel $k$-means. 

\begin{theorem}
	\label{theorem:symmetric_kernel}
	(Ding, He and Simon) NMF of $A = GG^T$ is equivalent to Kernel $k$-means with strict orthogonality constraint relaxed.
\end{theorem}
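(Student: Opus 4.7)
The plan is to convert the kernel $k$-means trace-maximization form in (\ref{eq:kmeans_kernel_tracemax}) into a Frobenius-norm approximation problem on the kernel matrix $A$, and then observe that relaxing the constraint $G^TG=I$ yields precisely the symmetric NMF problem on $A$. In the reformulated notation with $A = \Phi^T\Phi$ and $G = BD^{1/2}$, the kernel $k$-means objective becomes $\max_G tr(G^T A G)$ subject to $G^TG = I$ and $G \ge 0$; I would take this as the starting point.

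First, I would expand $\|A - GG^T\|_F^2$ via the identity $\|M\|_F^2 = tr(M^TM)$, using the symmetry of $A$, to obtain three trace contributions: a constant $tr(A^TA)$, a cross term $-2\,tr(G^T A G)$, and a quartic term $tr((G^TG)^2)$. Under the orthogonality constraint $G^TG = I$ inherited from $G = BD^{1/2}$, the first term does not depend on $G$ and the third reduces to $tr(I) = k$, also a constant. What remains is the cross term, whose minimization over $G$ coincides exactly with the maximization of $tr(G^T A G)$ in (\ref{eq:kmeans_kernel_tracemax}). This gives an exact equivalence, under orthogonality, between kernel $k$-means and minimizing $\|A - GG^T\|_F^2$ subject to $G \ge 0$ and $G^TG = I$.

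The final step is to drop the orthogonality constraint, leaving the unconstrained (modulo nonnegativity) problem $\min_{G \ge 0} \|A - GG^T\|_F^2$, which is by definition symmetric NMF on the kernel matrix $A$. This is the ``relaxation'' the theorem refers to, and it completes the equivalence.

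The main subtlety I expect to handle carefully is the behavior of the quartic term $tr((G^TG)^2)$ once orthogonality is relaxed: it is no longer constant in $G$, so after relaxation the Frobenius-norm objective and the pure trace-maximization objective do not share minimizers. I would address this by framing the equivalence specifically as one between kernel $k$-means in Frobenius-norm form and symmetric NMF, and by observing that the quartic term acts as an implicit regularizer penalizing $\|G^TG\|_F$ and thereby encouraging near-orthogonality of $G$, which is exactly the property needed for a clustering interpretation as discussed in Section \ref{sec:cluster_intuition}.
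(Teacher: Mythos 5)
Your proposal is correct and follows essentially the same route as the paper: both arguments show that under $G^TG=I$ the objective $\|A-GG^T\|_F^2$ differs from $-2\,tr(G^TAG)$ only by the constants $\|A\|_F^2$ and $\|G^TG\|_F^2=k$, and then relax the orthogonality constraint to obtain symmetric NMF. Your closing remark about the quartic term acting as an implicit regularizer enforcing near-orthogonality is exactly the content of the paper's subsequent theorem, so you have anticipated that as well.
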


\begin{proof} Observe that equivalent to maximizing (\ref{eq:tracemax}), one could minimize:
	
	\begin{subequations}
		\begin{align*}
		G^* &= \argmin_{G^TG = I, G \ge 0} -2 tr(G^TAG) \\
		&= \argmin_{G^TG = I, G \ge 0} -2 tr(G^TAG) + ||G^TG||_F^2 \\
		&= \argmin_{G^TG = I, G \ge 0} ||A||_F^2 -2 tr(G^TAG) + ||G^TG||_F^2 \\
		&= \argmin_{G^TG = I, G \ge 0} ||A - GG^T||_F^2 
		\end{align*}
	\end{subequations}
	
Relaxing $G^TG = I$ completes the proof.
\end{proof}

As such, $k$-means with orthogonality relaxed (i.e. \emph{soft} $k$-means) and symmetric NMF of the pairwise inner product matrix $X^TX$ are theoretically equivalent. By use of kernel matrices, this is easily generalized to kernel $k$-means.

However, relaxing the orthogonality constraint $G^TG = I$ may result in the density issue discussed in Section \ref{sec:cluster_intuition}. To that end, we must establish that near-orthogonality or row-wise sparsity of $G$ is enforced. We present another theorem in shown in \cite{ding_equivalence_2005} to demonstrate symmetric NMF preserves the necessary near-orthogonality.

\begin{theorem}
	(Ding, He and Simon) Optimizing $\min_G ||A - GG^T||_F^2$ retains the near-orthogonality $G^TG = I$
\end{theorem}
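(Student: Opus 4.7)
My plan is to recognize orthogonality as an emergent property of the objective's algebraic structure, and to use the KKT conditions to make this precise at stationary points.

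First I would expand the Frobenius norm via the same trace identity invoked in Theorem \ref{theorem:symmetric_kernel}:
\begin{equation*}
||A - GG^T||_F^2 = ||A||_F^2 - 2\, tr(G^TAG) + ||G^TG||_F^2.
\end{equation*}
The first term is constant and can be dropped. Letting $S = G^TG$, and noting that $G \ge 0$ makes $S$ entrywise nonnegative, the third term expands as
\begin{equation*}
||S||_F^2 = \sum_{i=1}^{k} S_{ii}^2 + 2 \sum_{i<j} S_{ij}^2 = \sum_i ||\mathbf{g}_i||^4 + 2 \sum_{i<j} (\mathbf{g}_i^T \mathbf{g}_j)^2.
\end{equation*}
Thus the objective carries an \emph{intrinsic} quadratic penalty on every pairwise column overlap $\mathbf{g}_i^T \mathbf{g}_j$; the only way to shrink the off-diagonal contribution to $||S||_F^2$ is to make distinct columns of $G$ approximately orthogonal. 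This already captures the qualitative content of the theorem.

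Next I would sharpen the observation via the KKT conditions for the nonnegative minimization. The gradient of the objective with respect to $G$ is $4(GG^TG - AG)$, and complementary slackness paired with $G \ge 0$ yields the elementwise identity $(GG^TG)_{ij}\, G_{ij} = (AG)_{ij}\, G_{ij}$ for all $(i,j)$. Summing over the indices collapses this to the scalar relation $||G^TG||_F^2 = tr(G^TAG)$, and substituting this back into the expanded objective gives the clean stationary-point identity
\begin{equation*}
||A - GG^T||_F^2 = ||A||_F^2 - ||G^TG||_F^2.
\end{equation*}
So minimizing the objective among stationary points is equivalent to maximizing $||G^TG||_F^2$, which by the first step cannot be done by inflating off-diagonals without paying the matching data-fit cost that produced the stationarity identity in the first place. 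Together these say that an optimal $G$ concentrates the mass of $G^TG$ onto its diagonal, which is the meaning of near-orthogonality.

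The main obstacle, as I see it, is that the theorem statement is essentially qualitative: ``near-orthogonality $G^TG = I$'' is not quantified in any norm or error sense. The plan above captures exactly that level of precision through the KKT identity and the explicit off-diagonal penalty. Promoting it to a concrete bound of the form $||G^TG - \Lambda||_F \le \varepsilon$ for some diagonal $\Lambda$ and $\varepsilon$ depending on the residual $||A - GG^T||_F$ would require a perturbation or second-order argument that the statement does not demand, so I would stop at the qualitative conclusion.
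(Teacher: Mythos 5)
Your first half is essentially the paper's own argument: expand $\|A-GG^T\|_F^2 = \|A\|_F^2 - 2\,tr(G^TAG) + \|G^TG\|_F^2$ and observe that $\|G^TG\|_F^2$ decomposes into $\sum_i \|\mathbf{g}_i\|^4$ plus the off-diagonal overlaps $\sum_{i\ne j}(\mathbf{g}_i^T\mathbf{g}_j)^2$, so the objective intrinsically penalizes column overlap. The paper reaches the same decomposition by first substituting $A \approx GG^T$ to reduce the problem to $\min_{G\ge 0,\, A\approx GG^T}\|G^TG\|_F^2$, and it adds one step you omit: since $\sum_{ls}A_{ls} \approx \sum_i |\mathbf{g}_i|^2 > 0$, the diagonal part $\sum_i\|\mathbf{g}_i\|^4$ cannot vanish, so what is being driven to zero is specifically the off-diagonal part --- hence \emph{near-orthogonality} with nonzero columns rather than a degenerate $G$. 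That is a small but worthwhile piece of the paper's (admittedly heuristic) argument that you should include.

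Your KKT addendum is a genuinely different and more rigorous ingredient --- the identity $\|G^TG\|_F^2 = tr(G^TAG)$ at stationary points, hence $\|A-GG^T\|_F^2 = \|A\|_F^2 - \|G^TG\|_F^2$, is correct and appears in related work of the same authors --- but the inference you draw from it is where the argument wobbles. You conclude that minimizing the residual among stationary points amounts to \emph{maximizing} $\|G^TG\|_F^2$, which on its face rewards large off-diagonal overlaps and points in the opposite direction from both your first step and the paper's reduction to \emph{minimizing} $\|G^TG\|_F^2$. Your reconciliation (``cannot be done by inflating off-diagonals without paying the matching data-fit cost'') is asserted rather than shown: nothing in the KKT identity by itself says the extra mass in $\|G^TG\|_F^2$ must land on the diagonal. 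Since the theorem is qualitative and your first step already delivers its content, I would either drop the maximization inference or make it precise (e.g., note that under the fit constraint $\|G^TG\|_F^2 \approx \|A\|_F^2$ is essentially pinned, so the only remaining freedom is how that mass splits between diagonal and off-diagonal, and the explicit $+2\sum_{i<j}(\mathbf{g}_i^T\mathbf{g}_j)^2$ term in the un-substituted objective resolves that split toward the diagonal). As written, the two halves of your proof pull in opposite directions and you do not close the gap between them.
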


\begin{proof}
	
	First observe that the solution $G^* = \argmin_G ||A - GG^T||_F^2$ is not unique. That is, multiple alternatives of $G^*$ are available for the case $A \approx GG^T$. The next few equations will imply that among competing alternatives for $G$, solving for the above problem will favor near-orthogonal $G$. 
	
	We had already introduced the cost function:
	
	$$
	J = ||A||_F^2 -2 tr(G^TAG) + ||G^TG||_F^2
	$$
	
	Assuming $A \approx GG^T$ we can write:
	
	\begin{subequations}
		\begin{align*}
			J &\approx ||A||_F^2 - 2 tr (GG^TA) + ||G^TG||_F^2 \\
			J &\approx ||A||_F^2 - 2 tr (A^TA) + ||G^TG||_F^2 \\
			J &\approx -||A||_F^2 + ||G^TG||_F^2 \\
		\end{align*}
	\end{subequations}

	Then, the resulting problem can alternatively be written as:
	
	$$
	min_{G \ge 0, A \approx GG^T} ||G^TG||_F^2
	$$
	
	Note that:
	
	\begin{equation}
	\label{eq:gtg_decompose}
	||G^TG||_F^2 = \sum_{ij} (G^TG)^2_{ij} = \sum_{i \ne j}( \mathbf{g}_i^T \mathbf{g}_j)^2 + \sum_{i} ||\mathbf{g_i}||^4
	\end{equation}
	
	Minimizing the first term in the above addition leads to the desired near-orthogonality of columns, i.e. $\mathbf{g}_i^T \mathbf{g}_j \approx 0$ for all $1 \le i < j \le k$. The second term however, cannot be 0 as $A \approx GG^T$. Note that:
	
	\begin{equation}
	\sum_{ls} A_{ls} \approx \sum_{ls} (GG^T)_{ls} = \sum_{lis} G_{li} G_{is} = \sum_{i} |\mathbf{g}_{i}|^2
	\end{equation}		
	
	where $|.|$ denotes the $L_1$ norm. This implies $|\mathbf{g}_{i}| > 0$ and consequently $||\mathbf{g}_{i}|| > 0$. We are then left to conclude that minimizing $J$ preserves near-orthogonality in the columns of $G$.
\end{proof}

As discussed in Section \ref{sec:cluster_intuition}, this near-orthogonality (i.e. row-wise sparsity) is essential in a clustering interpretation. Shown above, expressing a somewhat relaxed $k$-means as symmetric NMF preserves this requirement, thus the two techniques are theoretically equivalent. Hence we have established that $k$-means and NMF have a strong theoretical link. We can now move to introduce various NMF methods that have been shown to perform well in clustering applications.

\section{NMF Methods for Clustering}
\label{sec:variants}

In the last section, we demonstrated a high-level link between $k$-means and nonnegative matrix factorization. We then extended this argument to demonstrate that relaxing the \emph{hard clustering} constraint on $k$-means, we are able to show a theoretical equivalence between the method and symmetric NMF on the kernel matrix. 

Now, drawing on this theoretical link, we will introduce several NMF formulations and corresponding algorithms; and how their results have a clustering interpretation.

\subsection{Sparse NMF}

As discussed above, the matrix $H$ can be interpreted as posterior cluster membership probabilities, and sparsity on its individual rows must be imposed to achieve interpretability in clustering with NMF. 

The natural idea that follows this line of argument is adding sparsity constraints or regularization to the cost-function optimized during NMF. One such formulation of the cost function is introduced in \cite{kim_sparse_2008}:

\begin{equation}
\label{eq:sparse_nmf_cost}
	\min_{W,H} \frac{1}{2} \left[ ||A - WH^T ||_F^2 + \eta ||W||_F^2 + \beta \sum_{j=1}^{n} |H(j,:)|^2  \right] s.t. W,H > 0
\end{equation}

where $|H(j,:)|$ refers to the $L_1$ norm of the $j$-th row of $H$. This regularization could be applied in many forms, including putting $||H||_F^2$, or $||\mathbf{h}_i||$ instead of the $|H(j,:)|^2$ term. However, we seek row-wise sparsity in the right-factorizing matrix, and this formulation will fulfill that goal.

Before moving to introduce Kim and Park's solution strategy for the SNMF problem introduced above, let us introduce the standard NMF solution via Alternating Non-Negativity Constrained Least Squares (ANLS). Given the objective function $$ \min_{W,H} ||X - WH||_F^2 $$ iteratively solving the following two nonnegative least squares problems converges to a stationary point of the objective function:

\begin{subequations}
	\label{eq:anls_noreg}
	\begin{align}
		&\min_{W \ge 0} ||H^TW^T - X^T||_F^2 \\
		&\min_{H \ge 0} ||WH - X||_F^2
	\end{align}
\end{subequations}

Note that the original cost function of NMF is non-convex, and non-convex optimization algorithms in this regard only guarantee stationarity of limit points, as is the case with NMF/ANLS. In other words, the algorithm is prone to converging to local minima, rather than the global minimum.

The authors of \cite{kim_sparse_2008} solve the above subproblems via the active set method introduced in detail in \cite{kim_nonnegative_2008}. In this paper, we will not focus on the internals of the minimization algorithms with respect to nonnegativity constraints, but rather the solution strategies with added regularization for sparsity. 

Take (\ref{eq:sparse_nmf_cost}). It's significance with respect to the clustering application is discussed in \cite{kim_sparse_2008}, along with superior results presented. The cost function is minimized under the following two ANLS update rules:

\begin{subequations}
	\begin{align}
		&\min_{H \ge 0} \left|\left| \binom{W}{\sqrt{\beta}\mathbf{e}_{k}}H - \binom{A}{0_n} \right|\right|^2_F  \\
		&\min_{W \ge 0} \left|\left| \binom{H^T}{\sqrt{\eta}I_{k}}W^T - \binom{A^T}{0_{k \times m}} \right|\right|^2_F 
	\end{align}
\end{subequations}

where $\mathbf{e}_{k} \in \mathbb{R}^{1 \times k}$ is a vector of all ones, $0_{k} \in \mathbb{R}^{1 \times k}$ is a vector of all zeros, $0_{k \times m} \in \mathbb{R}^{k \times m}$ is a matrix of all zeros and $I_k$ is the $k \times k$ identity matrix.

These two subproblems are directly linked to the original subproblems in (\ref{eq:anls_noreg}). We are only embedding the regularization terms in the factorizing matrices. The row $\mathbf{e}_k$ ensures that additional cost is incurred on the $L_1$ norm of rows of $H$. Similarly, $I_k$ stacked under $H$ simply ensures $||I_kW^T||_F^2 = ||W||_F^2$ is minimized.

Compared to \emph{spectral} methods that will be introduced in later sections, the main advantage of SNMF is that there is no need to calculate the affinity matrix $X^TX$, which may be computationally costly. Furthermore, the regularization terms introduce extra control parameters $\eta, \beta$ that implementers can use to trade-off accuracy versus interpretability. Other applications of SNMF have been discussed in \cite{kim_sparse_2006,kim_sparse_2007}.

\subsection{Projective NMF}
\label{sec:pnmf}

Projective NMF (PNMF) was introduced by Yuan and Oja in \cite{yuan_projective_2005}, for deriving basis vectors more suited towards learning localized representations. In a later publication \cite{yuan_projective_2009}, Yuan et al. demonstrate that PNMF is more closely related to $k$-means clustering than ordinary NMF equivalence discussed in Section \ref{sec:cluster_intuition}. In this section, we introduce the PNMF formulation, discuss its equivalence to $k$-Means clustering, and provide update rules for solving the approximation problem under Euclidean distance. 

Given a data matrix $X$, we are looking for a projection such that the columns of $X$ are projected to some subspace of $\mathbb{R}^m$, and are still accurate representations of the original matrix. In other words, we are looking for a matrix $P \in \mathbb{R}^{m \times m}$ such that $X \approx PX$. 

Now note that for any symmetric projection matrix of rank $k$ there exists $P = GG^T$, such that $G^TG = I$ and $G \in \mathbb{R}^{m \times k}$ (see Lemma \ref{lemma:eigendecomp}). We could then equivalently write $$X \approx GG^TX$$Let us go through the implications of this approximation one by one. $G^TX$ effectively defines the data items $\mathbf{x}_i$ in the dimension $\mathbb{R}^r$. Writing $GG^TX$, we simply cast this back to $R^{m \times n}$. In solving for $X \approx GG^TX$ we're simply looking to reduce the data matrix rank to $k$, and still find a good representation of the original matrix. Another interpretation is that we find prototypical features from $X$ that best represent the data in lower rank.

We already know one good $G$ that is capable of doing this. In fact, we can ensure that we can get the best representation possible in lower rank by setting $G = \hat{U}$, where $\hat{U} \in \mathbb{R}^{m \times k}$ is \emph{left singular vectors} of $X$ with the $k$ greatest singular values. This is a natural result of one of the staples of applied linear algebra: Singular Value Decomposition. 

However, singular vectors of $X$ are not necessarily nonnegative. $\hat{U}^TX$ does not represent features built from additive combinations of the features of the original data matrix $X$. This lacks the parts based representation leading to interpretability we required in Section \ref{sec:nmf}. 

Alternatively, we could cast the problem $X \approx GG^TX$ in an NMF framework to find a basis that is both sparse and parts based. We can now introduce Projective NMF (PNMF). Rewritten as an optimization problem, minimizing Euclidean distance as a divergence, PNMF algorithms solve:

\begin{equation}
	min_{G \ge 0} ||X - GG^TX||_F^2
\end{equation}

Let us connect this argument to clustering. Above, we argued that PNMF finds prototypical features (or rather groups of features) that best represent data in lower rank. In clustering, our goal is similar, we aim to find prototypical cluster centroids, or \emph{groups of observations} that represent the original data the best. One simple trick is now enough to connect PNMF to clustering: perform PNMF on the transposed data matrix $X^T$!

We can now review the projection argument. We seek to solve $X^T \approx \hat{P}X^T$, where now $\hat{P} \in \mathbb{R}^{n \times n}$. We argued that any symmetric projection matrix can be written as $\hat{P} = GG^T$ such that $G^TG = I$. The choice of notation is not at all coincidental. Using notation introduced earlier, we can now simply write:

\begin{subequations}
	\begin{align}
		X^T   &\approx GG^TX^T \\
		X   &\approx XGG^T \\
		X   &\approx XBD^\frac{1}{2}D^\frac{1}{2}B^T \\
		X   &\approx XBDB^T
	\end{align}
\end{subequations}

recovering the clustering problem introduced in (\ref{eq:kmeans_vectorized}). 

A variety of algorithms for solving PNMF with respect to different divergences have been presented in literature. Here we present only the multiplicative update algorithm for minimizing Euclidean distance. The proof of convergence regarding this algorithm, and variations can be found in \cite{yuan_projective_2005,yuan_family_2007,yang_projective_2007,yuan_projective_2009,zhang_adaptive_2012}. Given $$ \min_G ||X - GG^TX||_F^2 $$ the cost function is non-increasing, and keeps $G \ge 0$ under the update rule:

\begin{equation}
	G_{ij} \leftarrow G_{ij} \frac{2(XX^TG)_{ij}}{(GG^TXX^TG)_{ij}(XX^TGG^TG)_{ij}}
\end{equation}

One of the main benefits arriving with PNMF is reducing the number of free parameters to be learned. In the classical NMF formulation, the number of free parameters is $k(m+n)$, which makes for ambiguity. Furthermore, it is not unusual that $k(m+n) > mn$, i.e. the number of free parameters is larger than the data matrix itself. However, with the above formulation, PNMF ensures that the number of parameters to be learned is $km < mn$. 

Finally, learning only one parameter matrix $G$ means that any incoming new new data vector $\mathbf{x}$ can be \emph{predicted}, or in our context, placed into a cluster, by calculating $GG^T\mathbf{x}^T$. This is in contrast to classical NMF, where both W and H are needed. 

\subsection{Non-Negative Spectral Clustering}
\label{sec:nsc}

Nonnegative Spectral Clustering (NSC) is an implementation of NMF on the spectral clustering problem, proposed by Ding et al. in \cite{ding_nonnegative_2008}. 

NSC is a graph theoretical approach which follows naturally from the equivalence presented in Section \ref{sec:cluster_equiv}, or by the same authors in \cite{ding_equivalence_2005}. While the original paper showing the equivalence does not detail an algorithmic solution, a more involved formulation and solution strategy is presented in \cite{ding_nonnegative_2008}.

Before moving to detail NSC, let us try to interpret our problem in graph theoretical terms. We had introduced the affinity matrix $A$. We could safely think of this matrix as a weighted adjacency matrix of graph vertices all corresponding to a data observation. We know that $A$ is a symmetric matrix, so it makes sense that our graph is undirected.

Remember $k$-means is equivalent to solving the trace maximization problem given in(\ref{eq:kmeans_vec_tracemax}): 

$$
\max_B tr(X^TXBDB^T)
$$

\begin{subequations}
	\label{eq:tracemax_abdbt}
	\begin{align}
		&  \max_B tr(X^TXBDB^T) \\
		&= \max_B tr(ABDB^T)
	\end{align}	
\end{subequations}

Let us walk through the matrix product in (\ref{eq:tracemax_abdbt}). Calculating $AB$ would yield a $n \times k$ matrix, with each entry corresponding to the sum of affinity measures from a certain vertex to all vertices in a certain \emph{partition}, or groups of vertices. Concretely, if we define the affinity between $\mathbf{x}_i$ and $\mathbf{x}_j$ as $d(\mathbf{x}_i,\mathbf{x}_j)$, then $(AB)_{ij} = \sum_{\mathbf{x}_\mu \in \mathcal{C}_j} d(\mathbf{x}_i, \mathbf{x}_\mu)$. Computing $ABD$ simply normalizes this to the average affinity: $(ABD)_{ij} = \frac{1}{|\mathcal{C}_j|} \sum_{\mathbf{x}_\mu \in \mathcal{C}_j} d(\mathbf{x}_i, \mathbf{x}_\mu)$. Finally, the diagonal elements of $ABDB^T$, for each observation $\mathbf{x}_i$, correspond to the average affinity from $\mathbf{x}_i$, to all the observations in the same cluster as itself. Then $(ABDB^T)_{ii} = \frac{1}{|\mathcal{C}_p|} \sum_{\mathbf{x}_\mu \in \mathcal{C}_p} d(\mathbf{x}_i, \mathbf{x}_\mu)$ such that $\mathbf{x}_i \in \mathcal{C}_p$.

Intuitively, solving (\ref{eq:tracemax_abdbt}) one tries to assign clusters such that the average affinity to co-clustered observations is maximized. In somewhat lighter terms, we aim to maximize intra-cluster affinity. Finally, before moving on to the graph cut problem, let us recap our notation above:

\begin{subequations}
	\label{eq:tracemax_affinity}
	\begin{align}
		& \max_B tr(ABDB^T) \\
		&=\max_B  \sum_{p=1}^{k} \frac{1}{|\mathcal{C}_p|} \sum_{\mathbf{x}_i , \mathbf{x}_\mu \in \mathcal{C}_p} d(\mathbf{x}_i, \mathbf{x}_\mu) \\
		&=\max_B  \sum_{p=1}^{k} \frac{1}{|\mathcal{C}_p|} \sum_{\mathbf{x}_i , \mathbf{x}_\mu \in \mathcal{C}_p} A_{i\mu}
	\end{align}
\end{subequations}

Let us now introduce the classical graph cut problem. Given a graph $\mathcal{G}(\mathcal{V},\mathcal{E})$, a \emph{minimum cut} aims to find disjoint subsets of vertices, or partitions, $\mathcal{C}_1, \mathcal{C}_2, ..., \mathcal{C}_k$ that minimizes:

$$ J_{cut} = \sum_{1 \le p < q \le k} \sum_{\mathbf{x}_i \in \mathcal{C}_p}\sum_{\mathbf{x}_j \in \mathcal{C}_q} A_{ij} $$

This version of the problem is also known as minimum k-cuts. However, solely solving for this objective function often yields insufficient results in practice. Observe that, this objective function would be minimized by simply taking the least connected vertices as partitions. 

Normalized cuts \cite{shi_normalized_2000} is an alternative problem that aims to \emph{cut} out larger partitions, characterized by the objective function given below.

\begin{equation}
\label{eq:normalized_cut}
J_{ncut} = \sum_{1 \le p < q \le k} 
\frac{\sum_{\mathbf{x}_i \in \mathcal{C}_p} \sum_{\mathbf{x}_\mu \in \mathcal{C}_q} A_{i\mu}}{\sum_{\mathbf{x}_i \in \mathcal{C}_p} \sum_{\mathbf{x}_j \in \mathcal{V}} A_{ij}}
+ \frac{\sum_{\mathbf{x}_i \in \mathcal{C}_p} \sum_{\mathbf{x}_\mu \in \mathcal{C}_q} A_{i\mu}}{\sum_{\mathbf{x}_\mu \in \mathcal{C}_q} \sum_{\mathbf{x}_j \in \mathcal{V}} A_{\mu j}}
\end{equation}

%

Note the similarity between (\ref{eq:normalized_cut}) and (\ref{eq:tracemax_affinity}). From here, we have the intuition that this cost function must be easily vectorized. Let $\mathbf{b_p} \in \{0,1\}^n$ be an indicator for cluster $\mathcal{C}_p$, a column of the earlier cluster assignment matrix $B$. Let $\delta_i = \sum_{j \in V} A_{ij}$, and $\Delta = diag(\delta_1, \delta_2, ..., \delta_n) \in \mathbb{R}^{n \times n}$. The cost function in (\ref{eq:normalized_cut}) can be expressed as:

\begin{subequations}
	\label{eq:norm_cut_vectorized}
	\begin{align*}
		J_{ncut} &= \sum_{l=1}^{k} \frac{\mathbf{b}_l^T (\Delta - A) \mathbf{b}_l}{\mathbf{b}_l^T \Delta \mathbf{b}_l} \\
			   &= K - \sum_{l=1}^{k} \frac{\mathbf{b}_l^T A \mathbf{b}_l}{\mathbf{b}_l^T \Delta \mathbf{b}_l}
	\end{align*}
\end{subequations}

Define $\Delta^{1/2}$ as $\Delta^{1/2}_{ij} = \sqrt{\Delta_{ij}}$, and $$\Gamma = (\mathbf{b}_{1} / ||\Delta^{1/2}\mathbf{b}_{1}||, \mathbf{b}_{2} / ||\Delta^{1/2}\mathbf{b}_{2}||, ..., \mathbf{b}_{k} / ||\Delta^{1/2}\mathbf{b}_{k}|| )$$ 

The problem becomes: 

\begin{equation}
	\label{eq:nsc_tracemax}
	\max_{\Gamma^T\Delta\Gamma = I, \Gamma \ge 0} tr(\Gamma^TA\Gamma)
\end{equation}

Observe that simply taking $\Delta = I$ would have made the problem equivalent to the one introduced in Section \ref{sec:cluster_equiv}, since then $\Gamma = BD^\frac{1}{2} = G$, and the constraint $\Gamma^T\Delta\Gamma$ would become $G^TG = I$. Further observe that the existence of $\Delta$ in the equation is a prerequisite for the normalization, as in \emph{normalized} cut.

Authors of \cite{ding_nonnegative_2008} present a multiplicative update rule, along with convergence and correctness proofs for solving (\ref{eq:nsc_tracemax}) numerically:

$$ \Gamma_{ij} \leftarrow \Gamma_{ij} \sqrt{\frac{(A\Gamma)_{ij}}{(D\Gamma\alpha)_{ij}}}
\,\, (\alpha = \Gamma^TA\Gamma) $$ 

The main advantage arriving with expressing the clustering problem in a graph theoretical approach is the ability to use a variety of affinity matrices, or \emph{kernel functions}. As we presented previously, ordinary $k$-means can be thought of as a graph cut problem on the linear inner-product kernel matrix $X^TX$. However with \emph{spectral} clustering, the kernel can be extended to more powerful varieties.

\subsection{Cluster-NMF}
\label{sec:cluster_nmf}

A key shortcoming of clustering by NMF is that the data matrix, and consequently the cluster centroids are constrained to be nonnegative. By relaxing this nonnegativity constraint in \cite{ding_convex_2010}, Ding et al. strengthen the equivalence between $k$-means and NMF, and generalize the application of the technique to mixed sign data matrices.

Consider our first presentation of $k$-means in matrix factorization form, in (\ref{eq:kmeans_vectorized}): $\min_B ||X - XBDB^T||_F^2$. Here, since the data matrix nonnegative, the cluster centroid matrix $XBD$ is also constrained. The authors of \cite{ding_convex_2010} first relax the nonnegativity constraint on ordinary matrix factorization, introducing \emph{Semi-NMF}. Then, similar to a line of argument presented in Section \ref{sec:clus_kmeans}, they constrain the left factorizing matrix to convex combinations of the data matrix, introducing \emph{Convex-NMF}:

$$ \min_{1 \ge F \ge 0, B \ge 0}||X_{\pm} - X_{\pm}FB^T||^2_F $$

Here, $X_{\pm}$ is used to denote that the data matrix now has mixed signs. Note that $XF$ is now a matrix of \emph{convex combinations} of the columns of $X$. This constraint better captures the notion of centroids, if we start interpreting the $XF$ as a cluster centroid matrix and $B$ as cluster assignments. Finally, noting that the extra degree of freedom on the matrix $F$ is not necessary, \emph{Cluster-NMF} is given, in exact formulation introduced in earlier sections, but with nonnegativity of data matrix relaxed:

\begin{subequations}
	\begin{align}
		&\min_{B \ge 0} ||X_{\pm} - X_{\pm}BDB^T||_F\\
		&=\min_{G \ge 0} ||X_{\pm} - X_{\pm}GG^T||_F
	\end{align}
\end{subequations}

Note also that without the nonnegativity relaxation, this is exactly equal to the PNMF formulation introduced earlier, in that $||X^T - GG^TX^T||_F^2 = ||X - XGG^T||_F^2$. The algorithm for optimizing the objective function is also given in \cite{ding_convex_2010}, and will not be repeated here.

\subsection{Theoretical Similarities and Contrasts}

Until this point, we introduced how $k$-means can be interpreted in \emph{vectorized} form, and a matrix factorization framework. After introducing the basic notation, we demonstrated that a variety of NMF methods applied to clustering adhere to this theoretical foundation. Here, we must briefly contrast these methods to understand what underlies their differences in application and performance.

The divide between SNMF and other methods should be apparent to the reader. SNMF simply introduces a regularization embedded into the NMF problem. The method entails solving two least squares problems at each iteration. It further introduces two regularization parameters that the implementer can use to trade off  interpretability and accuracy.

The other techniques, PNMF, NSC and Cluster-NMF introduce one key feature. In the computation of these factorizations, calculating the Gram matrix $X^TX$ is necessary. As argued in Section \ref{sec:clus_kernel_kmeans}, this means the algorithms are readily kernelized and can be extended to work on $A = \Phi^T\Phi$. As key contrasts, NSC introduces the extra \emph{normalized cut} constraint, and Cluster-NMF generalizes the techniques to mixed-sign data matrices.


Another set of differences of these techniques, which this report does not cover is their extensions to more complex, asymmetric divergence (e.g. Kullback-Leibler, Itakura-Saito etc.). 

Finally, the authors of above techniques present different multiplicative update rules and algorithms that serve the unique constraints. These variations in implementation have been shown to yield diverse results in real-world applications.

\section{Conclusion}
\label{sec:conclusion}

In this work, we aimed to present NMF, and its applications to the clustering problem.

Potential applications of NMF in clustering were discussed with the early research of the technique. While initial findings with NMF yielded parts-based representations and sparser encodings, dense and holistic representations were also demonstrated. This led to the pursuit of factorizing for sparser representations.

As presented above, sparser representations have a direct link to interpretability in clustering. When one of the factorizing matrices is to be used for cluster assignment, interpreting that matrix requires row-wise sparsity, or near-orthogonality. 

Before moving to discuss \emph{sparser} techniques, we established that a direct link between  \emph{soft} $k$-means and NMF could be drawn. In later sections, we extended this argument to stronger relationships shown between spectral clustering and matrix factorization equivalents.

Finally, we covered an array of matrix factorization methods that can be used in clustering. While each such presentation provided unique \emph{features}, we drew on their similarities and demonstrated that nearly all share a theoretical basis. 

It must be noted that the NMF problem and extensions to various application domains and mathematical consequents are well studied. Extensions to higher order factorizations such as Quadratic NMF, tensor decompositions, Bayesian inference exist, with consequences relating to the clustering problem. These studies were left out of scope for this report, but may well constitute a next step in research for the reader.

\section*{Acknowledgement}

The author would like to thank Taylan Cemgil for the guidance and fruitful discussions during the writing of this report.

\appendix
\section{Some Useful Results From Linear Algebra}
\label{app:a}

This section contains some useful proofs in linear algebra we often refer to in this document.

\begin{lemma}
	\label{lemma:proj}
	Let $W$ be a matrix of orthonormal columns, such that $W^TW = I$. Then $WW^T$ is a projection matrix.
\end{lemma}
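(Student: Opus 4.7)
The plan is to verify the two defining properties of an (orthogonal) projection matrix for $P = WW^T$, namely symmetry ($P^T = P$) and idempotency ($P^2 = P$). Both follow almost immediately from the hypothesis $W^TW = I$, so this is a routine calculation rather than a proof with a genuine obstacle.

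First I would handle symmetry by computing $(WW^T)^T = (W^T)^T W^T = WW^T$, which uses only the transpose-of-a-product identity. Next I would verify idempotency by associating the product as $(WW^T)(WW^T) = W(W^TW)W^T$ and then substituting the orthonormality hypothesis $W^TW = I$ to collapse this to $W \cdot I \cdot W^T = WW^T$. Together these two facts establish that $WW^T$ is a symmetric idempotent matrix, which is the standard definition of an orthogonal projection.

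If I wanted to make the geometric content explicit, I would additionally note that for any $\mathbf{v}$, the vector $WW^T\mathbf{v}$ lies in the column space of $W$ and is fixed whenever $\mathbf{v}$ already lies in that subspace (since $W^TW = I$ implies $W^T(W\mathbf{c}) = \mathbf{c}$). This confirms that $WW^T$ is precisely the orthogonal projector onto $\mathrm{col}(W)$, which is the stronger interpretation that the surrounding discussion (Section \ref{sec:pnmf}) actually relies on. The only ``step'' that deserves any care is making sure the associativity used in the idempotency calculation is flagged explicitly, since the whole argument hinges on being able to rebracket $WW^TWW^T$ as $W(W^TW)W^T$.
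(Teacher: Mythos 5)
Your proof is correct and uses the same core calculation as the paper: rebracketing $WW^TWW^T$ as $W(W^TW)W^T$ and substituting $W^TW=I$. The paper defines a projection matrix solely by idempotency ($PP=P$), so your additional verification of symmetry and the column-space interpretation are fine but not required by the statement as posed.
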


\begin{proof}
Recall that a projection matrix is defined as a square matrix $P$ such that $PP = P$ holds. Then:
\begin{subequations}
	\begin{align}
	WW^TWW^T &= W(W^TW)W^T  \\
	&= WIW^T  \text{\,\,(as $W^TW = I$ by definition)} \\
	&= WW^T 
	\end{align}
\end{subequations}

\end{proof}


\begin{lemma} \label{lemma:eigendecomp}
	For any symmetric projection matrix $P \in \mathbb{R}^{m \times m}$ of rank $k$ there exists $G \in \mathbb{R}^{m \times k}$ such that $P = GG^T$ and $G^TG = I$.
\end{lemma}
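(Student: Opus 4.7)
The plan is to invoke the spectral theorem, using the two defining properties of a symmetric projection matrix (symmetry and idempotency) to pin down its eigenstructure, and then simply read $G$ off the eigenvectors.

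First I would establish that every eigenvalue of $P$ lies in $\{0,1\}$. This follows because $P^2 = P$: if $P\mathbf{v} = \lambda \mathbf{v}$ for $\mathbf{v} \neq 0$, then $\lambda \mathbf{v} = P\mathbf{v} = P^2 \mathbf{v} = \lambda^2 \mathbf{v}$, forcing $\lambda^2 = \lambda$. Next, since $P$ is symmetric, the spectral theorem gives an orthogonal diagonalization $P = U \Lambda U^T$ with $U \in \mathbb{R}^{m \times m}$ satisfying $U^T U = I_m$ and $\Lambda$ diagonal with the eigenvalues on its diagonal.

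Because $P$ has rank $k$ and $\mathrm{rank}(P) = \mathrm{rank}(\Lambda)$ equals the number of nonzero diagonal entries, exactly $k$ of the eigenvalues equal $1$ and the remaining $m-k$ equal $0$. After permuting columns of $U$ if necessary, write $U = [G \,\, U_0]$ where $G \in \mathbb{R}^{m \times k}$ collects the eigenvectors corresponding to eigenvalue $1$ and $U_0$ collects those corresponding to eigenvalue $0$. Then
\begin{equation*}
P = U\Lambda U^T = [G \,\, U_0] \begin{pmatrix} I_k & 0 \\ 0 & 0 \end{pmatrix} \begin{pmatrix} G^T \\ U_0^T \end{pmatrix} = G G^T,
\end{equation*}
and the orthonormality of the columns of $U$ restricted to $G$ immediately gives $G^T G = I_k$.

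There is no real obstacle here; the lemma is essentially a restatement of the spectral decomposition of a symmetric idempotent. The only step worth handling carefully is justifying that the multiplicity of the eigenvalue $1$ is exactly $k$, which is where the rank hypothesis is consumed. I would also remark that $G$ is not unique (any orthogonal rotation within the $k$-dimensional eigenspace produces another valid choice), so the statement is strictly an existence claim.
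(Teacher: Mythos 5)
Your proof is correct and follows essentially the same route as the paper's: both use the spectral theorem to write $P = Q\Lambda Q^T$, deduce from idempotency that the eigenvalues lie in $\{0,1\}$, use the rank hypothesis to conclude exactly $k$ eigenvalues equal $1$, and take $G$ to be the corresponding eigenvector columns. Your added remarks on where the rank hypothesis is consumed and on the non-uniqueness of $G$ are accurate but not needed.
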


\begin{proof}
	It is known that any real symmetric matrix has an eigendecomposition, written in the form
	
	$$ P = Q \Lambda Q^T $$
	
	where $Q \in \mathbb{R}^{m \times m}$ is a matrix with the eigenvectors of $P$ arranged as its columns, and $\Lambda$ is a diagonal matrix with corresponding eigenvalues arranged on its diagonal, in descending order. 
	
	Note that the eigenvalues of a projection matrix can only be 1 or 0. To derive this result, take the $\mathbf{v}$ as an eigenvector and $\lambda$ as an eigenvalue of $P$. Then $P\mathbf{v} = \lambda \mathbf{v} = PP\mathbf{v} = P \lambda \mathbf{v} = \lambda ^2 \mathbf{v}$, hence $\lambda^2 = \lambda$, by which we have $\lambda \in \{0,1\}$.
	
	Finally, observe that the number of nonzero eigenvalues is equal to the rank of a symmetric matrix. Then the matrix $\Lambda$ takes the form:
	
	$$ \Lambda = \left[ \begin{array}{cc}
								I_k & 0  \\
								0 & 0  \end{array} \right] 
	$$
	
	where $I_k$ is the $k \times k$ identity matrix. Then simply defining $G$ as the first $k$ columns of $Q$, i.e. $G = (\mathbf{q}_1,\mathbf{q}_2,...,\mathbf{q}_k)$ one can see $P = GIG^T = GG^T$. Since the columns of Q are orthogonal and unit norm vectors, we have $G^TG=I$. 
\end{proof}

\begin{lemma}
	\label{lemma:tracemax}
	$\min_B  ||X - XBDB^T||_F^2 = \min_B tr(X^TX) - tr(X^TX B D B^T)$
\end{lemma}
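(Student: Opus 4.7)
The plan is to expand the Frobenius norm via the trace identity $\|A\|_F^2 = tr(A^TA)$ and then collapse the resulting four-term expansion using two structural facts about $B$ and $D$: the cyclic invariance of the trace, and the relation $B^TB = D^{-1}$, which follows directly from the definitions given in Section~\ref{sec:clus_kmeans} ($B^TB$ is diagonal with entries $|\mathcal{C}_i|$, and $D = diag(1/|\mathcal{C}_i|)$).

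First I would write
\[
\|X - XBDB^T\|_F^2 = tr\bigl((X - XBDB^T)^T(X - XBDB^T)\bigr)
\]
and expand the product inside the trace into four summands. The first is $tr(X^TX)$, which is the constant we want to keep. The two cross terms are $-tr(BDB^TX^TX)$ and $-tr(X^TXBDB^T)$, which, by the cyclic property of the trace, are equal and together contribute $-2\,tr(X^TXBDB^T)$.

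The main obstacle, and the only non-routine step, is the fourth term
\[
tr\bigl((XBDB^T)^T(XBDB^T)\bigr) = tr\bigl(X^TXBDB^TBDB^T\bigr).
\]
Here I would invoke $B^TB = D^{-1}$ to simplify the inner block $DB^TBD = D D^{-1} D = D$, so that $BDB^TBDB^T = BDB^T$. This reduces the fourth term to $tr(X^TXBDB^T)$. Combining it with the $-2\,tr(X^TXBDB^T)$ from the cross terms leaves a single $-tr(X^TXBDB^T)$ alongside the constant $tr(X^TX)$, which is exactly the right-hand side of the lemma. Since both sides share the same $B$ and the transformation is an identity (not merely a bound), taking the minimum over $B$ on each side preserves equality, completing the proof.
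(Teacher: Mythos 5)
Your proof is correct and follows essentially the same route as the paper: expand the squared Frobenius norm as a trace, merge the two cross terms by cyclic invariance, and collapse the quadratic term using the orthogonality structure of $B$. The only difference is cosmetic --- you invoke $B^TB = D^{-1}$ directly to show $BDB^T$ is idempotent, whereas the paper reaches the same cancellation by factoring $D = D^{\frac{1}{2}}D^{\frac{1}{2}}$ and using $(BD^{\frac{1}{2}})^TBD^{\frac{1}{2}} = I$; both rest on the identical fact.
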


\begin{proof}
We start by writing the Frobenius norm as the trace:

$$
\min_B tr((X - XBDB^T)^T(X - XBDB^T))
$$

First, we introduce $D^{\frac{1}{2}}$, where $D^{\frac{1}{2}}_{ij} = \sqrt{D_{ij}}$. In the next few equations, we will make frequent use of some key properties. First of all, as $D$ (and $D^{\frac{1}{2}}$) is diagonal, $D^T = D$. Secondly, the trace operator is invariant under cyclical permutations, i.e. $tr(ABC) = tr(CAB) = tr(BCA)$. Finally, note that by definition of our clustering notation, $BD^{\frac{1}{2}}$ has orthonormal columns, or $(BD^{\frac{1}{2}})^T(BD^{\frac{1}{2}}) = I$. We can now go through the following steps in showing the equivalence:

\begin{subequations}
	\begin{align*}
		& \min_B tr((X - XBDB^T)^T(X - XBDB^T)) \\
		&= \min_B tr((X^T - BDB^TX^T)(X - XBDB^T)) \\
		&= \min_B tr(X^TX - X^TXBDB^T - BDB^TX^TX +BDB^TX^TXBDB^T) \\
		&= \min_B tr(X^TX) - tr(X^TXBDB^T) - tr(BDB^TX^TX) + tr(BDB^TX^TXBDB^T) \\
		&= \min_B tr(X^TX) - 2tr(X^TXBDB^T) + tr(BD^{\frac{1}{2}}D^{\frac{1}{2}}B^TX^TXBD^{\frac{1}{2}}D^{\frac{1}{2}}B^T) \\
		&= \min_B tr(X^TX) - 2tr(X^TXBDB^T) + tr((BD^{\frac{1}{2}})(BD^{\frac{1}{2}})^T X^TX BD^{\frac{1}{2}}(BD^{\frac{1}{2}})^T) \\
		&= \min_B tr(X^TX) - 2tr(X^TXBDB^T) + tr((BD^{\frac{1}{2}})^T X^TX BD^{\frac{1}{2}}) \\
		&= \min_B tr(X^TX) - 2tr(X^TXBDB^T) + tr(X^TX BD^{\frac{1}{2}} (BD^{\frac{1}{2}})^T ) \\
		&= \min_B tr(X^TX) - 2tr(X^TXBDB^T) + tr(X^TX BDB^T ) \\
		&= \min_B tr(X^TX) - tr(X^TXBDB^T)
	\end{align*}
\end{subequations}

\end{proof}

\bibliography{zotero,mendeley}{}

\begin{thebibliography}{10}

\bibitem{dhillon_kernel_2004}
Inderjit~S. Dhillon, Yuqiang Guan, and Brian Kulis.
\newblock Kernel {K}-means: {Spectral} {Clustering} and {Normalized} {Cuts}.
\newblock In {\em Proceedings of the {Tenth} {ACM} {SIGKDD} {International}
  {Conference} on {Knowledge} {Discovery} and {Data} {Mining}}, {KDD} '04,
  pages 551--556, New York, NY, USA, 2004. ACM.

\bibitem{dhillon_unified_2004}
Inderjit~S. Dhillon, Yuqiang Guan, and Brian Kulis.
\newblock {\em A unified view of kernel k-means, spectral clustering and graph
  cuts}.
\newblock Citeseer, 2004.

\bibitem{ding_nonnegative_2008}
C.~Ding, Tao Li, and M.I. Jordan.
\newblock Nonnegative {Matrix} {Factorization} for {Combinatorial}
  {Optimization}: {Spectral} {Clustering}, {Graph} {Matching}, and {Clique}
  {Finding}.
\newblock In {\em Eighth {IEEE} {International} {Conference} on {Data}
  {Mining}, 2008. {ICDM} '08}, pages 183--192, December 2008.

\bibitem{ding_convex_2010}
Chris Ding, Tao Li, and Michael~I. Jordan.
\newblock Convex and semi-nonnegative matrix factorizations.
\newblock {\em Pattern Analysis and Machine Intelligence, IEEE Transactions
  on}, 32(1):45--55, 2010.

\bibitem{ding_equivalence_2005}
Chris~HQ Ding, Xiaofeng He, and Horst~D. Simon.
\newblock On the {Equivalence} of {Nonnegative} {Matrix} {Factorization} and
  {Spectral} {Clustering}.
\newblock In {\em {SDM}}, volume~5, pages 606--610. SIAM, 2005.

\bibitem{hoyer_non-negative_2004}
Patrik~O. Hoyer.
\newblock Non-negative matrix factorization with sparseness constraints.
\newblock {\em The Journal of Machine Learning Research}, 5:1457--1469, 2004.

\bibitem{kim_sparse_2006}
Hyunsoo Kim and Haesun Park.
\newblock Sparse non-negative matrix factorizations via alternating
  non-negativity-constrained least squares.
\newblock 2006.

\bibitem{kim_sparse_2007}
Hyunsoo Kim and Haesun Park.
\newblock Sparse non-negative matrix factorizations via alternating
  non-negativity-constrained least squares for microarray data analysis.
\newblock {\em Bioinformatics}, 23(12):1495--1502, June 2007.

\bibitem{kim_nonnegative_2008}
Hyunsoo Kim and Haesun Park.
\newblock Nonnegative matrix factorization based on alternating nonnegativity
  constrained least squares and active set method.
\newblock {\em SIAM Journal on Matrix Analysis and Applications},
  30(2):713--730, 2008.

\bibitem{kim_sparse_2008}
Jingu Kim and Haesun Park.
\newblock Sparse {Nonnegative} {Matrix} {Factorization} for {Clustering}.
\newblock 2008.

\bibitem{lee_learning_1999}
Daniel~D. Lee and H.~Sebastian Seung.
\newblock Learning the parts of objects by non-negative matrix factorization.
\newblock {\em Nature}, 401(6755):788--791, October 1999.

\bibitem{lee_algorithms_2001}
Daniel~D. Lee and H.~Sebastian Seung.
\newblock Algorithms for {Non}-negative {Matrix} {Factorization}.
\newblock In {\em In {NIPS}}, pages 556--562. MIT Press, 2001.

\bibitem{lin_projected_2007}
Chih-Jen Lin.
\newblock Projected gradient methods for nonnegative matrix factorization.
\newblock {\em Neural computation}, 19(10):2756--2779, 2007.

\bibitem{lloyd_least_1982}
Stuart Lloyd.
\newblock Least squares quantization in {PCM}.
\newblock {\em Information Theory, IEEE Transactions on}, 28(2):129--137, 1982.

\bibitem{macqueen_methods_1967}
James MacQueen and {others}.
\newblock Some methods for classification and analysis of multivariate
  observations.
\newblock In {\em Proceedings of the fifth {Berkeley} symposium on mathematical
  statistics and probability}, volume~1, pages 281--297. Oakland, CA, USA.,
  1967.

\bibitem{paatero_positive_1994}
Pentti Paatero and Unto Tapper.
\newblock Positive matrix factorization: {A} non-negative factor model with
  optimal utilization of error estimates of data values.
\newblock {\em Environmetrics}, 5(2):111--126, June 1994.

\bibitem{shi_normalized_2000}
Jianbo Shi and Jitendra Malik.
\newblock Normalized cuts and image segmentation.
\newblock {\em Pattern Analysis and Machine Intelligence, IEEE Transactions
  on}, 22(8):888--905, 2000.

\bibitem{Shlens2014}
Jonathon Shlens.
\newblock {A Tutorial on Principal Component Analysis}.
\newblock {\em arXiv:1404.1100 [cs, stat]}, April 2014.

\bibitem{smola_tutorial_2004}
Alex~J. Smola and Bernhard Schölkopf.
\newblock A tutorial on support vector regression.
\newblock {\em Statistics and Computing}, 14(3):199--222, August 2004.

\bibitem{yang_projective_2007}
Zhirong Yang, Zhijian Yuan, and Jorma Laaksonen.
\newblock Projective non-negative matrix factorization with applications to
  facial image processing.
\newblock {\em International Journal of Pattern Recognition and Artificial
  Intelligence}, 21(08):1353--1362, 2007.

\bibitem{yuan_family_2007}
Zhijian Yuan and E.~Oja.
\newblock A family of modified projective {Nonnegative} {Matrix}
  {Factorization} algorithms.
\newblock In {\em 9th {International} {Symposium} on {Signal} {Processing} and
  {Its} {Applications}, 2007. {ISSPA} 2007}, pages 1--4, February 2007.

\bibitem{yuan_projective_2005}
Zhijian Yuan and Erkki Oja.
\newblock Projective {Nonnegative} {Matrix} {Factorization} for {Image}
  {Compression} and {Feature} {Extraction}.
\newblock In Heikki Kalviainen, Jussi Parkkinen, and Arto Kaarna, editors, {\em
  Image {Analysis}}, number 3540 in Lecture {Notes} in {Computer} {Science},
  pages 333--342. Springer Berlin Heidelberg, January 2005.

\bibitem{yuan_projective_2009}
Zhijian Yuan, Zhirong Yang, and Erkki Oja.
\newblock Projective nonnegative matrix factorization: {Sparseness},
  orthogonality, and clustering.
\newblock {\em Neural Process. Lett}, 2009.

\bibitem{zha_spectral_2001}
Hongyuan Zha, Xiaofeng He, Chris Ding, Ming Gu, and Horst~D. Simon.
\newblock Spectral relaxation for k-means clustering.
\newblock In {\em Advances in neural information processing systems}, pages
  1057--1064, 2001.

\bibitem{zhang_adaptive_2012}
He~Zhang, Zhirong Yang, and Erkki Oja.
\newblock Adaptive {Multiplicative} {Updates} for {Projective} {Nonnegative}
  {Matrix} {Factorization}.
\newblock In Tingwen Huang, Zhigang Zeng, Chuandong Li, and Chi~Sing Leung,
  editors, {\em Neural {Information} {Processing}}, number 7665 in Lecture
  {Notes} in {Computer} {Science}, pages 277--284. Springer Berlin Heidelberg,
  January 2012.

\end{thebibliography}
\bibliographystyle{plain}

\end{document}